\documentclass[10pt,conference]{IEEEtran}
\usepackage{amsmath,amssymb,amsfonts,amsthm}
\newtheorem{proposition}{Proposition}
\usepackage{graphicx}
\usepackage{booktabs}
\usepackage{cite}
\usepackage{url}
\usepackage{algorithm}
\usepackage{algorithmic}
\usepackage{stfloats}
\usepackage{placeins}

\begin{document}

\title{Depth-Dependent Hidden-State Collapse in Dynamical System Autoencoders for LiDAR Point-Cloud Classification}

\author{
\IEEEauthorblockN{Patricia Medina}
\IEEEauthorblockA{
Department of Mathematics\\
New York City College of Technology, CUNY\\
Brooklyn, NY, USA\\
and\\
The Graduate Center, CUNY\\
New York, NY, USA\\
Email: Patricia.Medina36@citytech.cuny.edu
}
\and
\IEEEauthorblockN{Hy P. G. Lam}
\IEEEauthorblockA{
Department of Mathematical Sciences\\
Worcester Polytechnic Institute\\
Worcester, MA, USA\\
Email: hlam@wpi.edu
}
}

\maketitle

\begin{abstract}
We study Dynamical System Autoencoders (DSAE) for LiDAR point-cloud classification using spatial coordinates and Product Coefficient feature augmentations. The experiments compare separately trained DSAE architectures at encoder depths $K=1,\ldots,5$ and evaluate the resulting hidden representations with Random Forest, kNN, and a majority-class Dummy baseline. The main finding is a hidden-state collapse at $K=5$. For both xyz and xyz plus Product Coefficient inputs, the hidden-state standard deviation falls to the order of $10^{-5}$, while all three classifiers attain the same macro F1 score of $0.224688$. We prove that between-class hidden scatter is bounded by total hidden scatter, which in turn is controlled by the reported hidden-state variance. Thus a nearly constant hidden representation cannot retain substantial class-separating structure. Product Coefficients neither improve pre-collapse macro F1 nor prevent the $K=5$ collapse in the present DSAE setting. These results identify large-depth representation collapse as a concrete failure mode for DSAE LiDAR classification.

\end{abstract}

\section{Introduction}

LiDAR point clouds contain geometric information at several scales, but their dimensionality makes learned representations important for classification. Autoencoders provide nonlinear latent representations, while Product Coefficients provide geometric feature augmentations developed in earlier LiDAR work \cite{medina2025integrating}.

LiDAR point clouds provide dense three-dimensional representations of natural and built environments. Although modern LiDAR acquisitions typically include attributes such as intensity, return number, scan angle, and GPS time, this work focuses exclusively on the spatial coordinates $(x,y,z)$, making learned geometric representations particularly important for classification.

In this paper, we study Dynamical System Autoencoders (DSAE) in this setting. A DSAE repeatedly applies a learned transformation before extracting the hidden representation used by a downstream classifier. The number of encoder iterations $K$ is therefore a meaningful architectural parameter.

Our experiments reveal a sharp depth-dependent failure. Models trained with $K=1,\ldots,4$ retain strong classification performance. At $K=5$, however, the hidden representation becomes nearly constant. The collapse occurs for both xyz and xyz plus Product Coefficient inputs. At the same depth, Random Forest, kNN, and a majority-class Dummy classifier all attain the same macro F1 score.

We give a mathematical explanation of this behavior. Total hidden scatter decomposes into between-class and within-class scatter. Hence, when total hidden scatter approaches zero, between-class scatter must also approach zero. A nearly constant hidden representation therefore cannot preserve substantial class separation.

The paper makes three contributions. First, it evaluates DSAE depth across two LiDAR feature settings and two standard downstream classifiers. Second, it identifies a large-depth hidden-state collapse that persists across feature settings and random seeds. Third, it proves that the observed hidden variance bounds the class-separating scatter available to any downstream classifier.

\section{Related Work}

Dimensionality reduction and representation learning methods have become increasingly important in LiDAR point cloud classification due to the high dimensionality and geometric complexity of 3D spatial data. Classical approaches such as Principal Component Analysis (PCA), kernel approximations, and autoencoder-based latent representations have been widely studied for feature extraction and downstream classification tasks.

Product Coefficients originate from a dyadic measure representation framework developed in harmonic analysis and subsequently extended to multiscale representations for general dyadic data structures \cite{fefferman1991,bassu2020product,ness20xx}. More recently, they have been applied as geometric descriptors for LiDAR point cloud classification \cite{medina2025integrating, Medina2026PC}.
That study demonstrated that incorporating Product Coefficients alongside dimensionality reduction pipelines could improve classification accuracy relative to standard feature configurations. The framework also highlighted the potential role of hierarchical geometric descriptors in learning latent representations for point cloud data.

More recently, Dynamical System Autoencoders (DSAE) were introduced in \cite{he2024dsae} as an iterative latent representation framework motivated by dynamical systems formulations of autoencoder architectures. Unlike standard autoencoders, DSAE models apply iterative encoder compositions that may induce nontrivial latent geometric behavior across multiple iterations.

In this paper, we connect these two directions by investigating how iterative DSAE latent representations behave in LiDAR classification settings both with and without Product Coefficient feature augmentations. In particular, we study how encoder iteration depth influences downstream classification performance under different classifiers and latent feature configurations.

A continuation of this direction was further developed in \cite{medina2025integrating}, where Product Coefficients were investigated within expanded LiDAR classification pipelines and additional latent representation settings. That work explored how Product Coefficients interact with dimensionality reduction methods and downstream classifiers in more complex experimental configurations, further supporting their role as meaningful geometric descriptors for 3D point cloud data. The present paper extends these investigations into the setting of Dynamical System Autoencoders (DSAE), where the focus shifts from static latent reductions to iterative latent dynamical behavior across encoder compositions.

Random Forest constructs ensembles of decision trees from coordinate-based partitions of the feature space \cite{breiman2001random}, while kNN predicts from local neighborhoods \cite{cover1967nearest}. We use these classifiers as distinct probes of the learned hidden representation. Their convergence to the majority-class Dummy baseline at $K=5$ indicates that the failure occurs in the representation rather than in one particular classification rule.

\section{Methodology}

\subsection{Classical Autoencoders}

An autoencoder is a neural network architecture designed to learn a lower-dimensional representation of input data by reconstructing the input from a compressed latent representation. Given data points $x \in \mathbb{R}^n$, a classical autoencoder consists of two main maps: an encoder
\[
E:\mathbb{R}^n \to \mathbb{R}^m,
\]
where typically $m<n$, and a decoder
\[
D:\mathbb{R}^m \to \mathbb{R}^n.
\]

The encoder maps the input data to a latent variable
\[
z = E(x),
\]
while the decoder attempts to reconstruct the original input through
\[
\hat{x}=D(z)=D(E(x)).
\]

The encoder and decoder are trained by minimizing a reconstruction loss over a training data set $\{x_i\}_{i=1}^N$. A common choice is the mean squared error loss
\[
\mathcal{L}_{\mathrm{MSE}}
=
\frac{1}{N}\sum_{i=1}^N
\left\|x_i - D(E(x_i))\right\|_2^2.
\]

In this framework, the latent representation $z$ is expected to retain the most relevant information needed to reconstruct the input. Autoencoders are therefore commonly used for nonlinear dimensionality reduction, feature learning, denoising, and visualization.

In the context of this work, classical autoencoders provide a baseline method for reducing the dimension of feature vectors derived from 3D point cloud data. The learned latent variables can then be used as inputs for downstream classification tasks, allowing comparison with more structured approaches such as Dynamical System Autoencoders.

\subsection{Dynamical System Autoencoders}

Dynamical System Autoencoders, introduced by He et al. \cite{he2024dsae}, extend classical autoencoders by repeatedly applying a learned transformation. In the implementation used here, the repeated map acts on an extended state.

For an input feature vector $x_i\in\mathbb{R}^d$, let $h_i^{(k)}\in\mathbb{R}^r$ denote the hidden state after iteration $k$. We write
$$
u_i^{(k)}=\left(x_i^{(k)},h_i^{(k)}\right)\in\mathbb{R}^{d+r}.
$$
The initial state is
$$
u_i^{(0)}=(x_i,0),
$$
and the learned DSAE step is
$$
F_\theta:\mathbb{R}^{d+r}\longrightarrow\mathbb{R}^{d+r}.
$$
The iteration is
$$
u_i^{(k+1)}=F_\theta\left(u_i^{(k)}\right).
$$
After $K$ iterations,
$$
u_i^{(K)}=F_\theta^K\left(u_i^{(0)}\right)=\left(\widehat{x}_i^{(K)},h_i^{(K)}\right).
$$
The reconstruction loss is
$$
\mathcal{L}_{\mathrm{MSE}}=\frac{1}{N}\sum_{i=1}^N\left\|x_i-\widehat{x}_i^{(K)}\right\|_2^2,
$$
and the downstream classifiers are trained on $h_i^{(K)}$.

Each point on a depth curve in this paper comes from a separately trained DSAE architecture. The curves therefore describe an architecture-depth comparison. They do not represent intermediate states of one fixed learned map.

The iterative structure of DSAEs naturally introduces a dynamical systems perspective into latent representation learning.
In this work, we investigate how repeated latent compositions influence LiDAR-derived feature representations, including both spatial coordinates and Product Coefficient augmentations.

\subsection{Hidden-state scatter under DSAE depth}
\label{sec: hidden-state-scatter}

Let
$$
h_i^{(K)}\in\mathbb{R}^r
$$
be the hidden representation of sample $i$, and let $y_i$ be its class label. For each class $c$, define
$$
C_c=\{i:y_i=c\},\qquad N_c=|C_c|.
$$
The global hidden mean is
$$
\bar{h}^{(K)}=\frac{1}{N}\sum_{i=1}^N h_i^{(K)},
$$
and the class mean is
$$
\bar{h}_c^{(K)}=\frac{1}{N_c}\sum_{i\in C_c}h_i^{(K)}.
$$
Define the total hidden scatter
$$
T_K=\frac{1}{N}\sum_{i=1}^N\left\|h_i^{(K)}-\bar{h}^{(K)}\right\|^2,
$$
the between-class scatter
$$
B_K=\sum_c\frac{N_c}{N}\left\|\bar{h}_c^{(K)}-\bar{h}^{(K)}\right\|^2,
$$
and the within-class scatter
$$
W_K=\frac{1}{N}\sum_c\sum_{i\in C_c}\left\|h_i^{(K)}-\bar{h}_c^{(K)}\right\|^2.
$$
The scalar statistics reported in the experiments are
$$
\mu_K=\frac{1}{Nr}\sum_{i=1}^N\sum_{j=1}^r h_{ij}^{(K)}
$$
and
$$
\sigma_K^2=\frac{1}{Nr}\sum_{i=1}^N\sum_{j=1}^r\left(h_{ij}^{(K)}-\mu_K\right)^2.
$$
We report
$H_{\mathrm{mean}}=\mu_K, \quad H_{\mathrm{std}}=\sigma_K$.

\begin{proposition}For every depth $K$,
$$
0\leq B_K\leq T_K\leq r\sigma_K^2.
$$
\end{proposition}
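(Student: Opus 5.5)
The plan is to prove the two inequalities separately. Both rest on the same elementary fact: for any finite collection of vectors, the mean is the unique minimizer of the average squared distance. Concretely, for points $v_1,\dots,v_M$ with mean $\bar v$ and any vector $a$,
\[
\frac1M\sum_{i}\|v_i-a\|^2=\frac1M\sum_i\|v_i-\bar v\|^2+\|\bar v-a\|^2 ,
\]
because the cross term $\frac{2}{M}\sum_i\langle v_i-\bar v,\bar v-a\rangle$ vanishes. This identity will be established first and then used twice.

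\emph{Step 1: $0\le B_K\le T_K$.} Nonnegativity of $B_K$ is immediate, since it is a sum of nonnegative weights times squared norms. For the upper bound, I will first prove the decomposition $T_K=B_K+W_K$. Split $T_K$ as a sum over classes $c$. Inside each class $C_c$, apply the identity with $v_i=h_i^{(K)}$ for $i\in C_c$, $M=N_c$, $\bar v=\bar h_c^{(K)}$ and $a=\bar h^{(K)}$. Multiplying by $N_c/N$ and summing over $c$ turns the first term into $W_K$ and the second into $B_K$. Since $W_K\ge 0$, it follows that $B_K\le T_K$.

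\emph{Step 2: $T_K\le r\sigma_K^2$.} Expand $T_K$ coordinatewise:
\[
T_K=\sum_{j=1}^r\frac1N\sum_i\bigl(h_{ij}^{(K)}-\bar h_j^{(K)}\bigr)^2 ,
\]
where $\bar h_j^{(K)}$ is the $j$-th coordinate of the global mean. For each fixed $j$, apply the minimizing identity in dimension one with $a=\mu_K$. This gives
\[
\frac1N\sum_i(h_{ij}^{(K)}-\bar h_j^{(K)})^2\le \frac1N\sum_i(h_{ij}^{(K)}-\mu_K)^2 ,
\]
with the gap equal to $(\bar h_j^{(K)}-\mu_K)^2$. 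Summing over $j$ gives
\[
T_K\le \frac1N\sum_{i,j}(h_{ij}^{(K)}-\mu_K)^2 = r\sigma_K^2 .
\]
In fact this yields the exact identity $r\sigma_K^2=T_K+\sum_j(\bar h_j^{(K)}-\mu_K)^2$, which I may record.

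There is no real obstacle here. The only point needing care is bookkeeping. The scalar statistic $\sigma_K^2$ is centered at a single pooled mean $\mu_K$ over all coordinates, whereas $T_K$ centers each coordinate at its own mean. The factor $r$ comes from the $1/(Nr)$ normalization in $\sigma_K^2$, so the inequality points the right way precisely because per-coordinate centering is optimal. I will also note that $K$ plays no role beyond indexing, so the bound holds for every depth.
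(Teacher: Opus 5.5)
Your proof is correct and is essentially the paper's argument: the paper also obtains $T_K=B_K+W_K$ by cancelling the cross term within each class, and then derives the exact identity $r\sigma_K^2=T_K+\|\bar h^{(K)}-\mu_K\mathbf{1}_r\|^2$ by the same parallel-axis decomposition. The differences are cosmetic. The paper does the second step in vector form with $\mu_K\mathbf{1}_r$ where you sum coordinatewise, and you factor the cross-term cancellation into one lemma used twice rather than repeating it inline.
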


\begin{proof}

For $i\in C_c$,
$$
h_i^{(K)}-\bar{h}^{(K)}=\left(h_i^{(K)}-\bar{h}_c^{(K)}\right)+\left(\bar{h}_c^{(K)}-\bar{h}^{(K)}\right).
$$
The cross term vanishes after summation over $C_c$, since
$$
\sum_{i\in C_c}\left(h_i^{(K)}-\bar{h}_c^{(K)}\right)=0.
$$
Summing over classes gives
$$
T_K=B_K+W_K.
$$
Since $W_K\geq 0$,
$$
0\leq B_K\leq T_K.
$$
Let $\mathbf{1}_r\in\mathbb{R}^r$ denote the vector whose entries are all equal to $1$. Since
$$
h_i^{(K)}-\mu_K\mathbf{1}_r=\left(h_i^{(K)}-\bar{h}^{(K)}\right)+\left(\bar{h}^{(K)}-\mu_K\mathbf{1}_r\right),
$$
we obtain
\begin{align*}
\frac{1}{N} & \sum_{i=1}^N\left\|h_i^{(K)} -\mu_K\mathbf{1}_r\right\|^2 =\frac{1}{N}\sum_{i=1}^N\left\|h_i^{(K)}-\bar{h}^{(K)}\right\|^2 \\ 
& +2\left\langle\frac{1}{N}\sum_{i=1}^N\left(h_i^{(K)}-  \bar{h}^{(K)}\right),\bar{h}^{(K)}-\mu_K\mathbf{1}_r\right\rangle \\
& + \left\|\bar{h}^{(K)}-\mu_K\mathbf{1}_r\right\|^2.
\end{align*}
The middle term vanishes because
$$
\frac{1}{N}\sum_{i=1}^N\left(h_i^{(K)}-\bar{h}^{(K)}\right)=0.
$$
Hence
$$
\frac{1}{N}\sum_{i=1}^N\left\|h_i^{(K)}-\mu_K\mathbf{1}_r\right\|^2=T_K+\left\|\bar{h}^{(K)}-\mu_K\mathbf{1}_r\right\|^2.
$$
By the definition of $\sigma_K^2$,
$$
\sigma_K^2=\frac{1}{Nr}\sum_{i=1}^N\left\|h_i^{(K)}-\mu_K\mathbf{1}_r\right\|^2.
$$
Therefore
$$
r\sigma_K^2=T_K+\left\|\bar{h}^{(K)}-\mu_K\mathbf{1}_r\right\|^2,
$$
and in particular
$$
T_K\leq r\sigma_K^2.
$$
\end{proof}

If $\sigma_K=0$, then
$$
h_i^{(K)}=\mu_K\mathbf{1}_r
$$
for every sample $i$. Hence any deterministic classifier whose input is only the hidden representation must assign the same label to every sample. At $K=5$, the representation is nearly, rather than exactly, collapsed. The agreement of the Random Forest and kNN macro F1 scores with the majority-class Dummy baseline is therefore consistent with the exact collapse limit and provides empirical evidence that neither classifier extracts class-separating information from the hidden representation beyond the majority-class baseline.

\section{Experimental Results}

\subsection{Experimental Setup}

We evaluate Dynamical System Autoencoders (DSAE) on LiDAR point cloud classification tasks using spatial coordinates $(x,y,z)$ and Product Coefficient feature augmentations.

The experiments are conducted on a terrestrial LiDAR scan of a forest tree collected in Washington State (Fig.~\ref{fig:lidar_scene}). The dataset was collected by Dr. Jonathan Batchelor while he was a member of the Remote Sensing and Geospatial Analysis Laboratory (RSGAL) under the direction of Dr. L. Monika Moskal at the University of Washington. The point cloud contains 798,452 labeled points belonging to three semantic classes: ground, trunk/branches, and canopy. Although the original LiDAR data include additional attributes such as intensity, return number, scan angle, GPS time, and RGB values, this study intentionally uses only the spatial coordinates $(x,y,z)$ in order to isolate the behavior of the learned latent representations and the effect of Product Coefficient feature augmentations.

\begin{figure}[!t]
\centering
\includegraphics[width=\columnwidth]{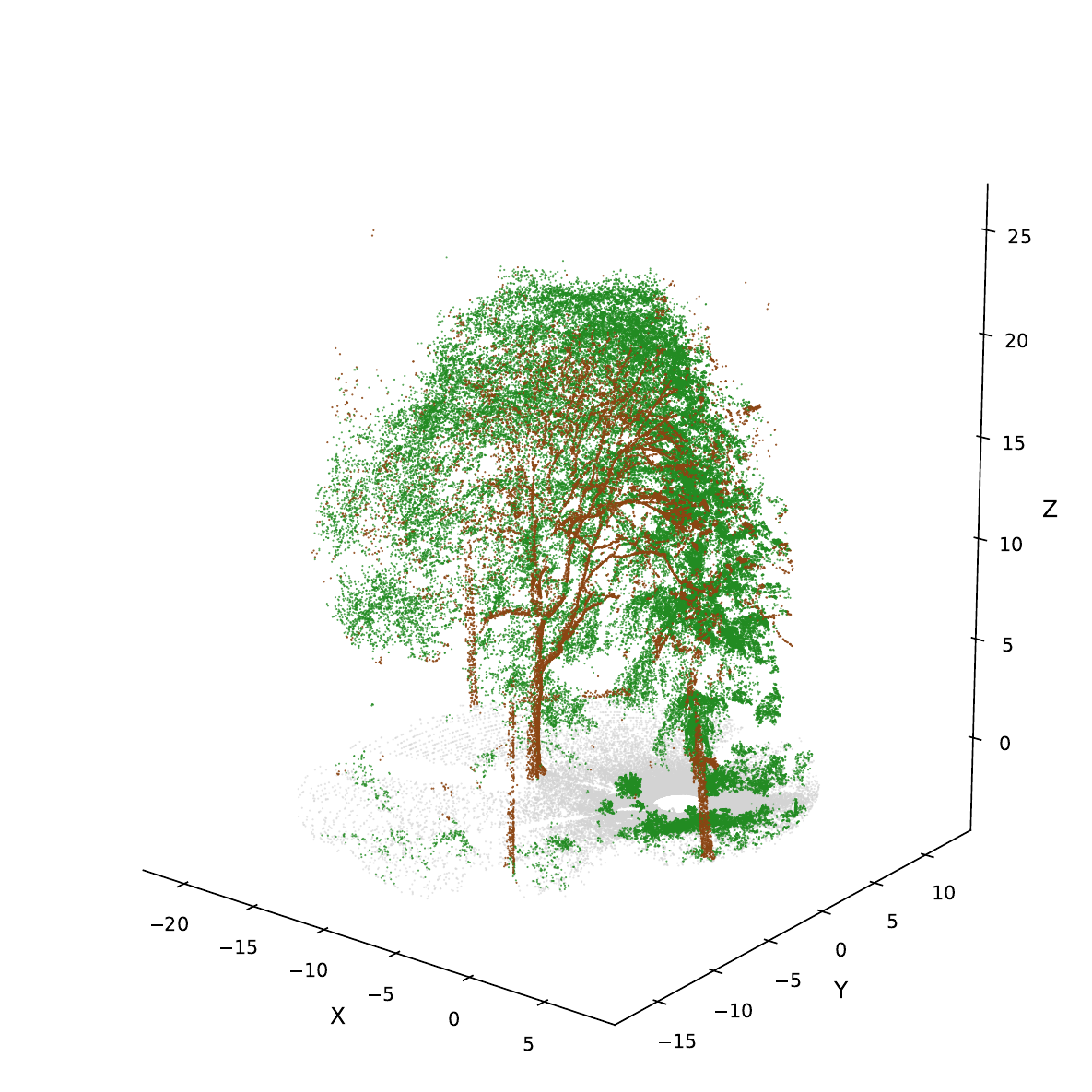}
\caption{Terrestrial LiDAR point cloud of a forest scene collected in Washington State. The point cloud contains 798,452 labeled points belonging to three semantic classes: ground, trunk/branches, and canopy, shown in gray, brown, and green, respectively. The dataset was collected by Dr.~Jonathan Batchelor while a member of the Remote Sensing and Geospatial Analysis Laboratory (RSGAL) under the direction of Dr.~L.~Monika Moskal at the University of Washington. Although the original LiDAR data include additional attributes such as intensity, return number, scan angle, GPS time, and RGB values, only the spatial coordinates $(x,y,z)$ are used in the experiments.}
\label{fig:lidar_scene}
\end{figure}

Each depth $K\in\{1,2,3,4,5\}$ was trained as a separate DSAE model from initialization. All runs used the same 80/20 stratified train-test split, with 638,761 training samples, 159,691 test samples, and 3 classes. Feature scaling was fitted on the training split and then applied unchanged to the test split. All DSAE models were trained for $25$ epochs using the Adam optimizer with a learning rate of $10^{-3}$ and a batch size of $256$. The hidden layer dimension was set to $3$ for the XYZ feature representation and $10$ for the combined XYZ + Product Coefficients (PCs) representation.

Random Forest used $100$ trees with the remaining hyperparameters set to the \texttt{scikit-learn} defaults. The kNN classifier used $k=5$ with the Euclidean distance metric (\texttt{metric='minkowski'} with $p=2$) and uniform weighting. The Dummy baseline used the most-frequent training class. Macro F1 was computed on the same test split for every classifier.

\begin{figure}[t]
\centering
\includegraphics[width=\linewidth]{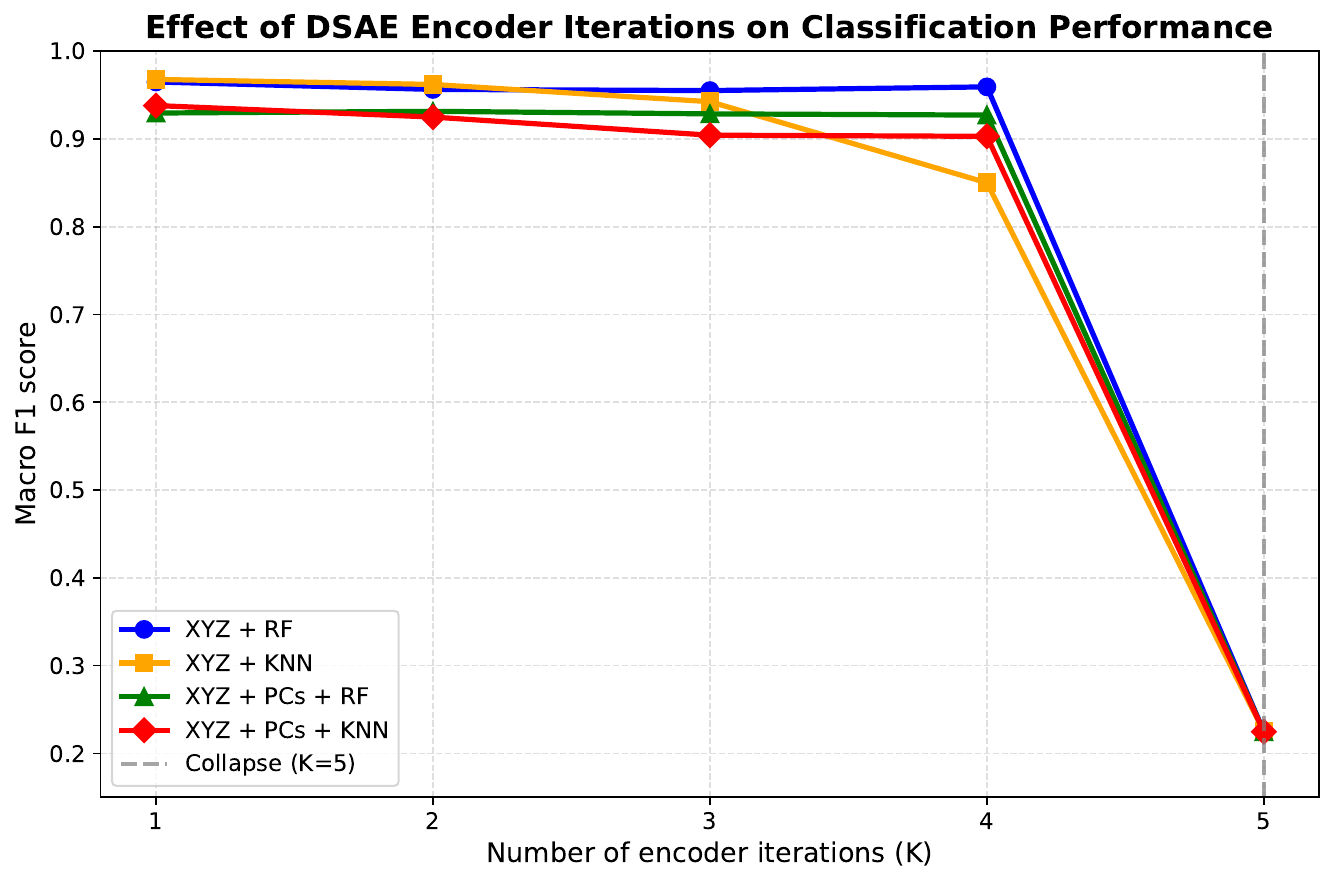}
\caption{Macro F1 as a function of DSAE encoder depth for xyz and xyz plus Product Coefficient inputs. Models at each depth were trained separately. Performance remains high through $K=4$, while all feature-classifier configurations collapse to the majority-class Dummy baseline at $K=5$.}
\label{fig:dsae_iterations_collapse}
\end{figure}

Hidden-state statistics were computed from the same test hidden representations used for downstream evaluation. We use the population standard deviation after flattening the hidden matrix, so $H_{\mathrm{mean}}=\mu_K$ and $H_{\mathrm{std}}=\sigma_K$, with $\mu_K$ and $\sigma_K$ defined in Section~\ref{sec: hidden-state-scatter}.

The experiments investigate how encoder iteration depth influences downstream classification performance. We evaluate the latent representations using both Random Forest (RF) and k-Nearest Neighbor (kNN) classifiers. Classification performance is measured using the macro F1 score.

We consider multiple encoder iteration depths and hidden dimensions to study the stability of iterative latent representations under different geometric feature settings.


\subsection{Depth-Dependent Classification Performance}

Figure~\ref{fig:dsae_iterations_collapse} shows macro F1 across encoder depths and feature settings. For $K=1,\ldots,4$, the learned representations retain strong classification performance. At $K=5$, every configuration collapses to macro F1 $0.224688$.

Because each depth was trained as a separate DSAE model, this is an architecture-depth result rather than a trajectory result for one fixed learned map. The sharp transition at $K=5$ shows that increasing DSAE depth can change the learned representation qualitatively rather than merely reducing classification accuracy gradually.

\subsection{$K=5$ Hidden-State Collapse}

To test whether the large-depth degradation is caused by collapse of the learned representation, we computed the hidden-state statistics and macro F1 scores at $K=5$. The results are shown in Table~\ref{tab:k5-collapse}. In both feature settings, the hidden-state standard deviation is on the order of $10^{-5}$. At the same depth, Random Forest, kNN, and the majority-class Dummy classifier all attain macro F1 $0.224688$.

This indicates that the $K=5$ hidden representation is nearly constant. By Proposition~1,
\[
B_5\leq T_5\leq rH_{\mathrm{std}}^2.
\]
Thus the between-class hidden scatter is also numerically negligible. The failure therefore occurs before classification. The downstream classifiers receive a hidden representation with essentially no usable class-separating variation.

For the $K=5$ latent representation, direct computation gives
\[
T_5 = 7.74\times 10^{-13},
\qquad
B_5 = 7.74\times 10^{-13}
\]
for xyz, and
\[
T_5 = 2.00\times 10^{-12},
\qquad
B_5 = 2.00\times 10^{-12}
\]
for xyz plus Product Coefficients. Both quantities are numerically close to zero.

For the xyz $K=5$ architecture, the collapse was also reproduced across five random seeds. RF macro F1 was $0.224688$ for every seed, while
\[
H_{\mathrm{std}}=(1.33\pm0.38)\times10^{-5}.
\]
The reconstruction loss varied across runs, but the collapsed hidden representation and downstream failure remained unchanged.

\begin{table*}[!t]
\centering
\small
\setlength{\tabcolsep}{7pt}
\caption{Hidden-state collapse at $K=5$. For both feature settings, the hidden-state standard deviation is on the order of $10^{-5}$, and all classifiers attain the majority-class Dummy macro F1.}
\label{tab:k5-collapse}
\begin{tabular}{lccccc}
\hline
Features
& $H_{\mathrm{mean}}$
& $H_{\mathrm{std}}$
& RF macro F1
& kNN macro F1
& Dummy macro F1 \\
\hline
xyz
& 0.999878
& $1.08\times10^{-5}$
& 0.224688
& 0.224688
& 0.224688 \\
xyz + PCs
& 0.999752
& $2.95\times10^{-5}$
& 0.224688
& 0.224688
& 0.224688 \\
\hline
\end{tabular}
\end{table*}

\subsection{Role of Product Coefficients}

Product Coefficients provide a second geometric feature setting for testing whether the collapse is specific to xyz input. In the present experiments, they do not prevent the $K=5$ collapse. The xyz plus Product Coefficient representation also becomes nearly constant, and RF, kNN, and Dummy attain the same macro F1.

Before collapse, Product Coefficients do not consistently improve macro F1 over xyz in this DSAE pipeline. We therefore do not claim a classification benefit from Product Coefficients in the present setting. Their role here is to show that the large-depth collapse is not confined to raw spatial coordinates. This result is specific to the current DSAE architecture and does not contradict earlier Product Coefficient gains obtained in different classification pipelines.

\section{Discussion}

The experiments identify a depth-dependent failure of the learned representation. At $K=5$, both input feature settings produce hidden states with standard deviation on the order of $10^{-5}$. The scatter bound
$$
B_K\leq T_K\leq rH_{\mathrm{std}}^2
$$
shows that such a representation cannot retain substantial between-class separation.

The convergence of Random Forest, kNN, and Dummy macro F1 scores gives classifier-independent evidence for this interpretation. Random Forest and kNN use different decision rules, yet both attain the majority-class baseline macro F1 after the hidden representation collapses. The failure is therefore best understood as representation collapse rather than classifier failure.

The present experiment does not prove why optimization produces collapse at $K=5$. It proves what the observed collapse implies for class separation, and it demonstrates that the same failure occurs across two feature settings. The depth curves also compare separately trained architectures. A fixed-map experiment evaluating intermediate states of one trained DSAE remains future work.

\section{Conclusion}

We studied the effect of DSAE encoder depth on LiDAR point-cloud classification using xyz coordinates and Product Coefficient feature augmentations. Models trained with $K=1,\ldots,4$ retain strong macro F1 performance. At $K=5$, both feature settings undergo hidden-state collapse.

The collapsed representations have standard deviation on the order of $10^{-5}$, and Random Forest, kNN, and a majority-class Dummy classifier all attain macro F1 $0.224688$. We proved that between-class scatter is bounded by total hidden scatter, which is itself controlled by the reported hidden-state variance. Thus the observed collapse removes the class-separating structure required for downstream classification.

Product Coefficients do not prevent collapse or consistently improve pre-collapse macro F1 in the present DSAE setting. Future work will study the optimization mechanism causing collapse, fixed-map intermediate states, explicit regularization of hidden scatter, and geometry-aware reconstruction losses.

\section*{Acknowledgments}

The authors thank Dr. Randy Paffenroth and Dr. Shiquan He of Worcester Polytechnic Institute for discussions concerning Dynamical System Autoencoders and learned latent representations. The authors also thank Dr. Shiquan He for providing access to the DSAE implementation used in the experiments.

\bibliographystyle{IEEEtran}
\bibliography{references-may26}

\end{document}